\documentclass{article}
\usepackage[nonatbib,final]{neurips_2020}
\usepackage[utf8]{inputenc} 
\usepackage[T1]{fontenc}    
\usepackage{hyperref}       
\usepackage{url}            
\usepackage{booktabs}       
\usepackage{amsfonts}       
\usepackage{nicefrac}       
\usepackage{microtype}      
\usepackage{amsmath,amssymb,amsthm}
\usepackage{graphicx}
\usepackage{color, colortbl}
\definecolor{good}{rgb}{.7,.7,1}
\definecolor{bad}{rgb}{1,.7,.7}
\usepackage{stfloats}
\usepackage{algorithm}
\usepackage{xcolor}
\usepackage{multirow}
\usepackage[noend]{algpseudocode}

\DeclareMathOperator*{\argmin}{arg\,min}
\DeclareMathOperator{\E}{\mathbb{E}}
\DeclareMathOperator{\R}{\mathbb{R}}

\newcommand{\Uniform}{\text{Uniform}}
\newtheorem{theorem}{Theorem}[section]

\newtheorem{lemma}[theorem]{Lemma}
\newtheorem{example}{Example}[section]

\usepackage{microtype}
\usepackage{graphicx}
\usepackage{subfig}
\usepackage{booktabs} 
\usepackage{hyperref}

\title{Robust Correction of Sampling Bias using Cumulative Distribution Functions}

\author{%
   Bijan Mazaheri \\
  Department of Computing and Mathematical Sciences,\\
  California Institute of Technology,\\
  Pasadena, CA 91125 \\
  \texttt{bmazaher@caltech.edu}  
   \And
  Siddharth Jain \\
   Department of Electrical Engineering, \\
  California Institute of Technology, Pasadena, CA, 91125 \\
  \texttt{sidjain@caltech.edu} \\
  \AND
   Jehoshua Bruck \\
   Department of Electrical Engineering, \\
   California Institute of Technology, Pasadena, CA, 91125 \\
   \texttt{bruck@caltech.edu} \\
}

\begin{document}

\maketitle

\begin{abstract}
Varying domains and biased datasets can lead to differences between the training and the
target distributions, known as covariate shift. Current approaches for alleviating
this often rely on estimating the ratio of training and target probability density functions. These techniques require
parameter tuning and can be unstable across different datasets. We present a new method
for handling covariate shift using the empirical cumulative distribution function estimates of
the target distribution by a rigorous generalization of a recent idea proposed by Vapnik and Izmailov. Further, we show experimentally that our method is more robust in its predictions,
is not reliant on parameter tuning and shows similar classification performance compared to
the current state-of-the-art techniques on synthetic and real datasets.
\end{abstract}
\section{Introduction}
Traditional machine learning approaches assume that training data and target data are drawn from the same distribution. Under this assumption, finding the model which minimizes the error on the training dataset also minimizes the expected error in the target domain.
\begin{equation}\label{eq:train_target_opt}
    \argmin_{f} \E_{x \in X_{\text{target}}}[L(f(x), y)] \approx \argmin_{f} L(f(X_{\text{train}}), y_{\text{train}})
\end{equation}

In practice, sampling bias can lead to a breakdown of the assumption in Eq.\ref{eq:train_target_opt}. Certain populations may be more likely to answer a survey or seek a test, leading to over-representation in these demographics. In addition, researchers may wish to re-purpose data outside of an easily sampled domain. For example, a cancer researcher may use European data with a target domain of North America. In these situations, the errors of over-represented regions exert excessive influence on the selection of the model when minimizing training error (see Figure \ref{fig:biased_domains}).

Formally, let the joint distribution of input $x$ and output $y$ on the training and target distributions be given by $p(x,y)$ and $q(x,y)$ respectively. We note that $p(x,y) = p(x)p(y|x)$ and $q(x,y) = q(x)q(y|x)$. This paper concerns itself with handling a shift in the input probability distribution ($p(x) \neq q(x)$) with an unchanged output function ($p(y|x) = q(y|x)$). This specific type of mismatch between $p(x,y)$ and $q(x,y)$ is known as \emph{covariate shift}. We will consider cases where the the target distribution $q(x)$ can be sampled from, but not labeled. For example, the age distribution of a population may be known, but we may only have access to labeled data from surveys from one neighborhood.


Past approaches for covariate shift have suggested re-weighting the loss incurred by each training point using estimates of the importance weights.  More specifically, \cite{reweighting} suggests handling differences in the distribution by looking at the ratio of probability densities of the covariates in the training ($p(x)$) and target distribution ($q(x)$).  The weight for each datapoint in the weighted loss function is given by $w_1(x) = \frac{q(x)}{p(x)}$. By weighing loss at a point according to this scheme, the apparent density of covariate data shifts from $p$ to $q$.  
Hence, minimizing this new weighted loss function $L_w(f(x), y)$ on the \emph{training} data minimizes the expected loss $L(f(x), y)$ under the \emph{target} distribution on $x$.
\cite{Sugiyama} and \cite{Makoto} modify this framework by regularizing the ratio between density functions. \cite{Sugiyama} suggests $w_2(x) = (\frac{q(x)}{p(x)})^{\tau}$ using a regularization parameter, $\tau$.  This parameter is tuned using cross validation on the training data, which is weighted according to importance sampling~\cite{Rubenstein}. 

A key shortcoming of these methods is that they require estimating the probability density function of both the training and target data using density estimation techniques like kernel density estimation (KDE)~\cite{dudahart}. These methods for covariate shift are extremely sensitive to the bandwidth ($h$) used in KDE (see Figure \ref{fig:bandwidth}). The tuning of this bandwidth, as well as the regularization parameters suggested in \cite{Sugiyama} require cross validation, which limits the data available for training. 
Less well-behaved distributions may also differ in optimal bandwidth for different subsets of their domain, further complicating the problem \cite{terrell1992}. 

\cite{Sugiyama08,Sugiyama09,Makoto} propose bypassing the KDE step by solving for importance weights that minimize KL-divergence~\cite{Sugiyama08} (KLIEP) or unconstrained least squares~\cite{Sugiyama09} (uLSIF) or the extension of uLSIF to relative uLSIF (RuLSIF)~\cite{Makoto}. Unfortunately, these methods still rely on parameter tuning when choosing the basis functions used in the algorithm, which gives rise to the same limitations. A different method based on moment-matching is suggested in~\cite{huang2007correcting}, but again requires tuning of parameters for solving the optimization problem of interest. \cite{anqi} uses a minimax estimation formulation to improve robustness to a worst case scenario. While this method does not require explicit parameter tuning, it still requires an arbitrary selection of features, such as moments, that characterize the source distribution.

While previous works focus on the \emph{accuracy} of classification, we additionally concern ourselves with accurately and stably learning the conditional probability of the output. Previous methods for handling covariate shift suffer from \emph{instability} (non-robustness) in prediction of the target probability function (see Figure~\ref{fig:stability_teaser}). This instability may not affect classification accuracy severely, but it poses a severe limitation on the interpretability of the results.
\begin{figure}
    \centering
    \subfloat[]{\includegraphics[width=0.33\textwidth]{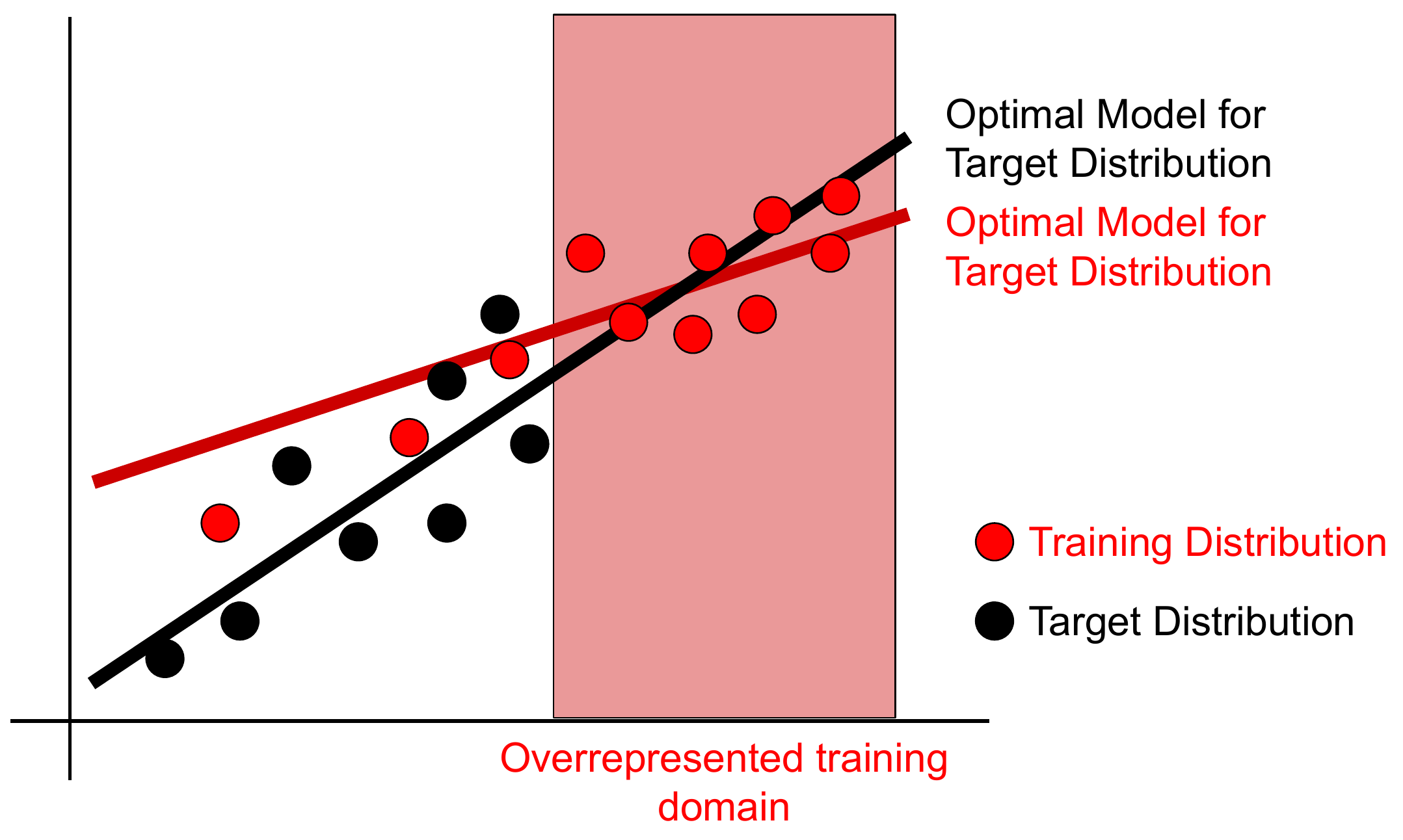}\label{fig:biased_domains}}
    \subfloat[]{\includegraphics[width=0.33\textwidth]{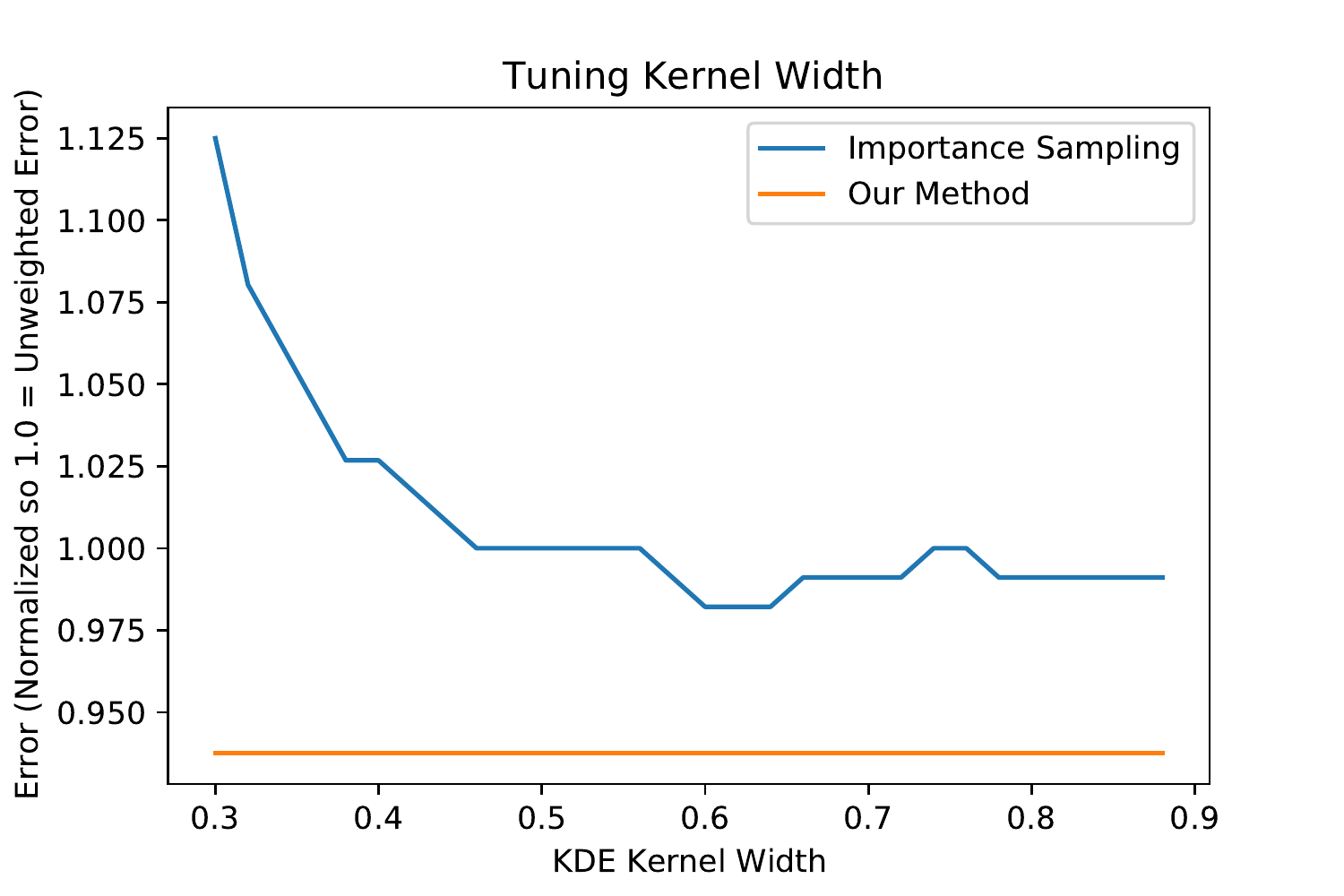}\label{fig:bandwidth}}
    \subfloat[]{\includegraphics[width=0.33\textwidth]{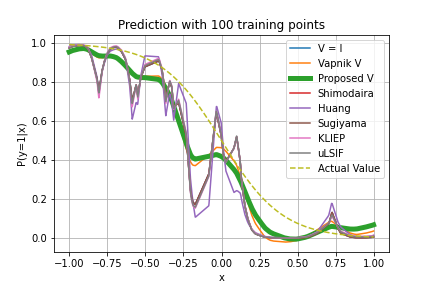}\label{fig:stability_teaser}}
    \caption{(a) Illustrating covariate shift: The training data (red) is concentrated more in the red rectangular box while the target data (black, drawn from the target distribution) has samples outside the red box. (b) A demonstration of how the choice of bandwidth during the covariate shift can lead to wildly different performance in traditional importance weighting \cite{reweighting}. This represents one run of Experiment 4 for the ringnorm dataset. (c) Previous methods are not stable when giving probability estimates. The smoother prediction curve of our proposed method compared to other widely known covariate shift methods demonstrating the \emph{robustness} of the proposed method. For details of parameters used, refer to section 3.3.}
\end{figure}

Recently, \cite{Vapnik} put forth a framework for fitting the conditional probability of labels.
A key advantage of this framework is that it is based on empirically estimating the cumulative distribution function from data, rather than the probability density function. The cumulative distribution function smooths granularities that exist in empirical data, which makes this method more \emph{robust or stable} in its estimation of conditional probability. In this paper we adapt \cite{Vapnik}'s framework to the covariate shift problem. While \cite{Vapnik}'s derivation assumes a uniform distribution, we define a loss function that minimizes the expected error over an estimate of the \emph{target} distribution (see Figure \ref{fig:vvisualization}). 
As our method relies solely on the empirical estimate of the cumulative distribution function of the target distribution, it does not require any parameter tuning which is a significant advantage over the other widely known methods.
\subsection{Contributions}
\begin{itemize}
\item We calculate the $V$-matrix introduced in \cite{Vapnik} using an empirical estimate of the cumulative distribution of the target distribution, showing that the estimate of the $V$-matrix obtained using empirical cumulative distribution is the minimum variance unbiased estimator (MVUE) of the true $V$ (Lemma \ref{thm:MVUE}).  We also provide tight rate of convergence theorems for the empirical estimate of $V$ to the true $V$ for one dimensional (Theorem \ref{thm:conc}) and higher dimensional data (Theorem \ref{thm:main_conc}).
\item We experimentally compare the proposed covariate shift method against the widely known covariate shift methods~\cite{reweighting,huang2007correcting,Sugiyama,Sugiyama08,Sugiyama09} using the Support Vector Machine (SVM) algorithm and show more \emph{robust} predictions and \emph{reduced} $L2-prediction$ error on a synthetic dataset.  
\item We conduct experiments on real datasets ~\cite{Wolberg1990,Zhang,Ratsch,Smith,banknote_cite} and show comparable performance to other widely known covariate shift methods \cite{reweighting,huang2007correcting,Sugiyama, Sugiyama08,Sugiyama09}. For most other methods, performance relies heavily on tuning the right parameters. In some situations, if the parameters were not correctly tuned, their performance was much worse than the unweighted classifier. Methods for automatically tuning parameters like \cite{Sugiyama08,Sugiyama09} sometimes fail to find these optimal states, also performing worse than unweighted classifiers. On the other hand, since our method does not require any tuning of parameters, it rarely performs significantly worse than an unweighted classifier. Our method outperforms all other tested methods in certain experiments and shows consistent good performance in different settings.

\end{itemize}
The rest of the paper is organized as follows. 
In Section \ref{sec:prelim}, we introduce the key details of the framework described in \cite{Vapnik}. In section \ref{sec:results}, we present our method, algorithms and experiments for covariate shift under \cite{Vapnik}'s framework. In section \ref{sec:conclusion}, we conclude the paper. 
\section{Preliminaries}\label{sec:prelim}
Let $z = (z^1,z^2,\cdots,z^n) \in \R^n$, $x\in \R$ and $S\subseteq \R$. We define threshold and indicator functions:
\begin{align*}
    \theta(z). &= 
\begin{cases}
    1,& \text{if } z^i\geq 0 ~\forall~ i\\
    0,              & \text{otherwise.}
\end{cases}  & \mathcal{I}_{x\in S} &= 
\begin{cases}
    1,& \text{if } x\in S \\
    0,              & \text{otherwise.}
\end{cases}
\end{align*}

\subsection{Fredholm Integral Equation}
The conditional probability distribution function $p(y=1 \; | \; x)$ is defined as the solution $f(x)$ of the following Fredholm equation:
\begin{equation}\label{eq:fredholm}
    \underbrace{\int_{\R^n}\theta(x-x')f(x')dP(x')}_{F_1(x)} = \underbrace{P(y=1,x)}_{F_2(x)}
\end{equation}
We note that this definition of the conditional probability function does not require the existence of a density function. In order to estimate $f(x)$, we need to find the solution of Equation (\ref{eq:fredholm}) above using the training dataset $(x_1,y_1), (x_2,y_2),\cdots,(x_N,y_N)$. \cite{Vapnik} estimates $f(x)$ using the empirical distribution approximations of $P(x)$ and $P(y=1,x)$ which are given by 
$P_N(x) = \frac{1}{N}\sum_{i=1}^N \theta(x-x_i)$ and 
$P_N(1,x) = \frac{1}{N}\sum_{i=1}^N y_i\theta(x-x_i).$
Substituting these empirical distributions $P_N(x)$ and $P_N(1,x)$ in Eq. (\ref{eq:fredholm}) we obtain, 
\begin{align}
F_1 &= \frac{1}{N} \sum_{i=1}^N f(x_i)\theta(x-x_i) &
F_2 &=  \frac{1}{N} \sum_{i=1}^N y_i\theta(x-x_i) \label{eq:f1f2}
\end{align}

\begin{figure}[h]
\begin{center}
\includegraphics[width=5 in]{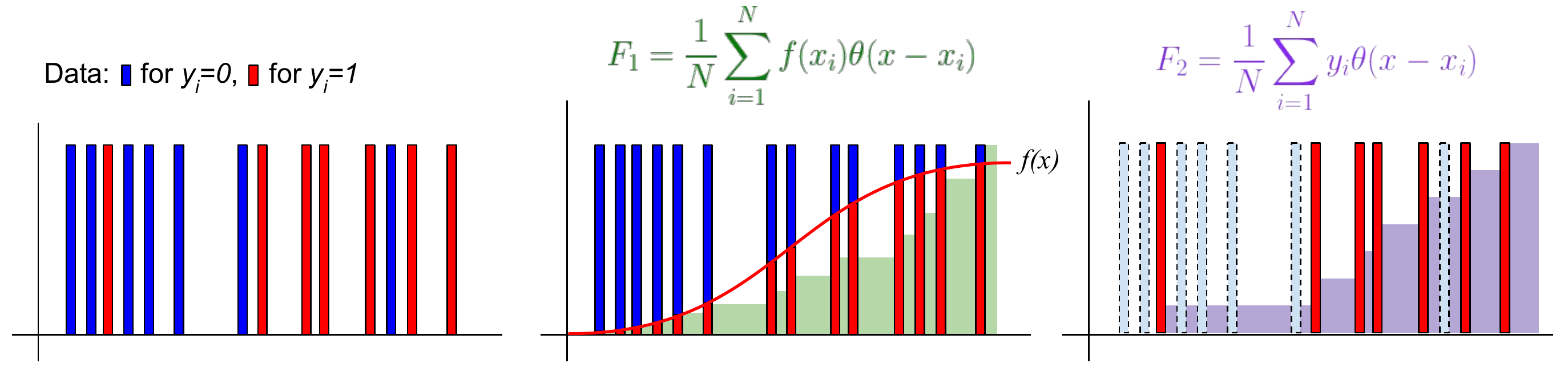}
\caption{A visualization of $F_1$ and $F_2$, which we seek to make similar in $L2$ norm. In $F_1$, the CDF increases at all locations of data, incrementing according to the value of $f(x)$.  In $F_2$, the CDF increases by equal increments for all points with $y_i = 1$.}
\label{fig:F1F2_vis}
\end{center}
\end{figure}

\subsection{Loss Function}
The solution of equation \ref{eq:fredholm} is found by minimizing the following loss function $L(f)$:
\begin{equation}\label{eq:lf}
L(f) = \rho^2(F_1, F_2) + \gamma W(f)
\end{equation}
$\rho^2(F_1,F_2)$ is a measure of distance between $F_1$ and $F_2$, $W(f)$ is the regularizing function and with regularization constant $\gamma$. Under the $L2$ metric $\rho^2(F_1,F_2)$ is given by:
\begin{equation}\label{eq:L2}
\rho^2(F_1,F_2) = \int(F_1(x)-F_2(x))^2\phi(x)d\mu(x)
\end{equation}
where $\phi(x)\geq 0$ is a weight function that we will leave as constant. $\mu(x)$ is a probability measure defined on domain $\mathcal{D}$ consisting of $x \in \R^n$. Note that $\mu(x)$ is a cumulative distribution function.
Plugging in the empirical estimates for $F_1$ and $F_2$  from Eq.~\ref{eq:f1f2} gives us 
\begin{equation}\label{eq:L2plug}
    \rho^2 = \frac{1}{N^2} \int_{\R^n} \small{\left(\sum_{i=1}^N\theta(x-x_i)f(x_i)-\sum_{j=1}^Ny_j\theta(x-x_j)\right)^2}\phi(x)d\mu(x)
\end{equation}
Eq. (\ref{eq:L2plug}) simplifies to 
\begin{equation}\label{eq:final}
\begin{split}
    \rho^2 &= \frac{1}{N^2}\sum_{i,j=1}^N (f(x_i)-y_i)(f(x_j)-y_j)V(i,j)
\end{split}
\end{equation}
Here $V$ denotes a $n\times n$ symmetric matrix with $V(i,j)$ given by equation (\ref{eq:V_matrix}). Intuitively, this matrix factors into account the mutual positions of the points $x_i$ and $x_j$. Note here if $V$ is an identity matrix, we recover the classical $L2$ loss function.
\subsection{V-matrix}
The $V$-matrix is given by the following expression:
\begin{equation}\label{eq:V_matrix}
    V(i,j) = \int_{\R^n}\theta(x-x_i)\theta(x-x_j)\phi(x)d\mu(x).
\end{equation}
if $\mu(x) = \prod_{k=1}^n\mu_k(x^k)$, Eq. (\ref{eq:V_matrix}) can be simplified when $\phi(x) = 1$ and is given by the following lemma  
\begin{lemma}\label{lem:V}
For $\phi(x)=1~  and ~\mu(x)=\prod_{k=1}^n\mu_k(x^k)$, then 
$V(i,j) = \prod_{k=1}^n\left[1-\mu_k(max(x_i^{k-},x_j^{k-}))\right].$
\end{lemma}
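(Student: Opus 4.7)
The plan is to unfold the definition of $V(i,j)$ and exploit the product structure of $\mu$ to reduce a single $n$-dimensional integral to a product of $n$ one-dimensional CDF evaluations.

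First I would rewrite the integrand. By the definition of $\theta$, the product $\theta(x-x_i)\theta(x-x_j)$ equals $1$ exactly when $x^k \ge x_i^k$ and $x^k \ge x_j^k$ hold simultaneously for every coordinate $k$, and $0$ otherwise. Equivalently, $\theta(x-x_i)\theta(x-x_j) = \prod_{k=1}^n \mathcal{I}_{x^k \ge \max(x_i^k, x_j^k)}$. Setting $\phi \equiv 1$, the defining equation (\ref{eq:V_matrix}) becomes
\begin{equation*}
V(i,j) = \int_{\R^n} \prod_{k=1}^n \mathcal{I}_{x^k \ge \max(x_i^k, x_j^k)} \, d\mu(x).
\end{equation*}

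Next I would invoke the product-measure hypothesis $\mu(x) = \prod_{k=1}^n \mu_k(x^k)$. Since each factor of the integrand depends only on its own coordinate, Fubini's theorem (applied to the $n$-fold product measure) splits the integral into a product of one-dimensional integrals:
\begin{equation*}
V(i,j) = \prod_{k=1}^n \int_{\R} \mathcal{I}_{x^k \ge \max(x_i^k, x_j^k)} \, d\mu_k(x^k).
\end{equation*}

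Finally, each one-dimensional integral is just the $\mu_k$-mass of the half-line $[\max(x_i^k,x_j^k),\infty)$, which equals $1$ minus the left-limit of the CDF at that point, i.e., $1 - \mu_k(\max(x_i^{k-},x_j^{k-}))$. Taking the left limit is the only delicate point in the argument: the threshold $\theta$ uses a weak inequality, so any atom of $\mu_k$ at $\max(x_i^k,x_j^k)$ must be included, which is exactly what the left-limit notation $\mu_k(a^-)$ encodes. Assembling the factors yields $V(i,j) = \prod_{k=1}^n \bigl[1 - \mu_k\bigl(\max(x_i^{k-},x_j^{k-})\bigr)\bigr]$, as claimed. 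I do not anticipate a real obstacle here; the lemma is essentially a bookkeeping statement, and the only care required is distinguishing left limits from values of the CDF when $\mu_k$ may have atoms.
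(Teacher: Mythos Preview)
Your proof is correct and follows essentially the same route as the paper: factor the threshold product coordinatewise, split the integral via the product structure of $\mu$, and evaluate each one-dimensional tail as $1-\mu_k(\max(x_i^{k-},x_j^{k-}))$. Your write-up is in fact more careful than the paper's in explaining why the left limit appears (the weak inequality in $\theta$ forces inclusion of any atom at the threshold).
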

\begin{proof}
$$V(i,j) = \int_{\R^n}\theta(x-x_i)\theta(x-x_j)\phi(x)d\left(\prod_{k=1}^n\mu_k(x^k)\right).$$
\begin{align*}
V(i,j) = \prod_{k=1}^n\int_{\R}\theta(x^k-x_i^k)\theta(x^k-x_j^k)d\mu_k(x^k)
       = \prod_{k=1}^n\int_{max(x_i^{k-},x_j^{k-})}^\infty d\mu_k(x^k)\\
       = \prod_{k=1}^n\left[1-\mu_k(max(x_i^{k-},x_j^{k-}))\right].
\end{align*}
\end{proof}
{\bf Note:} If $\mu_k(.)$ is left-continuous at $x^k$, then $\mu_k(x^{k-}) = \mu_k(x^k).$

We state below examples of V-matrix calculated using Lemma \ref{lem:V} for two different choices of $\mu(x)$ with $\phi(x) = 1$.
\begin{example}\label{exm:gauss_V}
    $\phi(x) = 1, n = 1, x \sim \mathcal{N}(0,1)$, then $V(i,j) = \frac{1}{2}\left[1-erf\left(\frac{max(x_i,x_j)}{\sqrt{2}}\right)\right].$
\end{example}    
\begin{example}\label{exm:main_V}
    $\phi(x) = 1, x^k \sim U[-c^k,c^k], ~~1\leq k\leq n$ ($U$ denotes uniform distribution), then
    $V(i,j) = \frac{1}{\prod_{k=1}^n2c^k}\prod_{k=1}^n[c^k - max(x_i^k,x_j^k)].$
\end{example}
Note that $V$-matrix does not depend on labels $y_i$.
\cite{Vapnik} showed the application of the $V$-matrix in example \ref{exm:main_V} to Support Vector Machines~\cite{Cortes1995}, termed V-SVM, which we will now discuss.

\subsection{V-SVM}
By assuming that $f(x) =\sum_{i=1}^N\alpha_iK(x_i,x)+c$ and $W(f) = \sum_{i,j=1}^N\alpha_i\alpha_jK(x_i,x_j)$ for a given Kernel function $K(.,.)$, ~\cite{Vapnik} derived a closed form solution for $f(x)$ given the loss function $L(f)$ in Eq. (\ref{eq:lf}) where the term $\rho^2(.,.)$ is given by Eq. (\ref{eq:final}). Before stating the solution for $f(x)$, we define the following notation:

$A = (\alpha_1,\alpha_2,\cdots,\alpha_N)^T,$
$\mathcal{K}(x) = (K(x_1,x),K(x_2,x),\cdots,K(x_N,x))^T,$
     $N\times N$ dimensional matrix $K$ with $K_{ij} = K(x_i, x_j)$ being the $ij^{th}$ entry,
     $Y = (y_1,y_2,\cdots,y_N)^T,$
     $N$- dimensional vector $1_N = (1,1,\cdots,1)^T,$
     $I$ being the $N\times N$ identity matrix.
     
Note under this notation $f(x)$ can be rewritten as 
$    f(x) = A^T\mathcal{K}(x)+c.$
~\cite{Vapnik} derived the closed form solution for $A$ and $c$ which is given by 
     $A = A_b -cA_c,$
 where $A_b = (VK+\gamma I)^{-1}VY, A_c = (VK+\gamma I)^{-1}V1_N,$
     $c = \frac{1_N^TV(KA_b-Y)}{1_N^TV(KA_c-1_N)}.$
 Note here if $V = I_N$, we recover the solution for the classical SVM with L2-error. 
In the next section, we consider handling covariate shift in this framework by deriving the V-matrix using samples from the target distribution. As seen in Eq. (\ref{eq:V_matrix}), the V-matrix is independent of the labels, hence unlabeled test data information is sufficient for this procedure. 
\section{Results: Using Test Data from the Target Distribution in Learning}\label{sec:results}
\begin{figure}[h]
\begin{center}
\begin{minipage}{0.58\textwidth}
\begin{center}
\begin{algorithm}[H]
  \centering
    \caption{Covariate Shift Classification 1}\label{algo:covariate_shift_1}
\textbf{Input:} $D = (x_1,y_1),(x_2,y_2),\cdots, (x_N,y_N)$, $T = t_1,t_2,\cdots,t_M.$ \\

\hspace*{-3.65 cm} \textbf{Output:} $p(y = 1|x)$ 
\begin{algorithmic}[1]
\Procedure{ComputeV($D$, $T$)}{}
\State $V \gets 0^{N \times N}$
\For{$i, j \leq N$}
\For{$q \leq M$}
\If{$x_i \preceq t_q$ and $x_j \preceq t_q$}
\State $V_{ij} \gets V_{ij}+1$
\EndIf
\EndFor
\State $V_{ij} \gets \frac{V_{ij}}{M}$
\EndFor
\Return V
\EndProcedure  
\end{algorithmic}
\begin{algorithmic}[1]
\State $V \gets$ C{\scriptsize OMPUTE}V$(D,T)$
\State $p(y=1|x) \gets$ V-SVM$(V,D,T)$
\end{algorithmic}
\end{algorithm}
\end{center}
\end{minipage}
\begin{minipage}{0.4 \textwidth}
\begin{center}
\includegraphics[width=1.35 in]{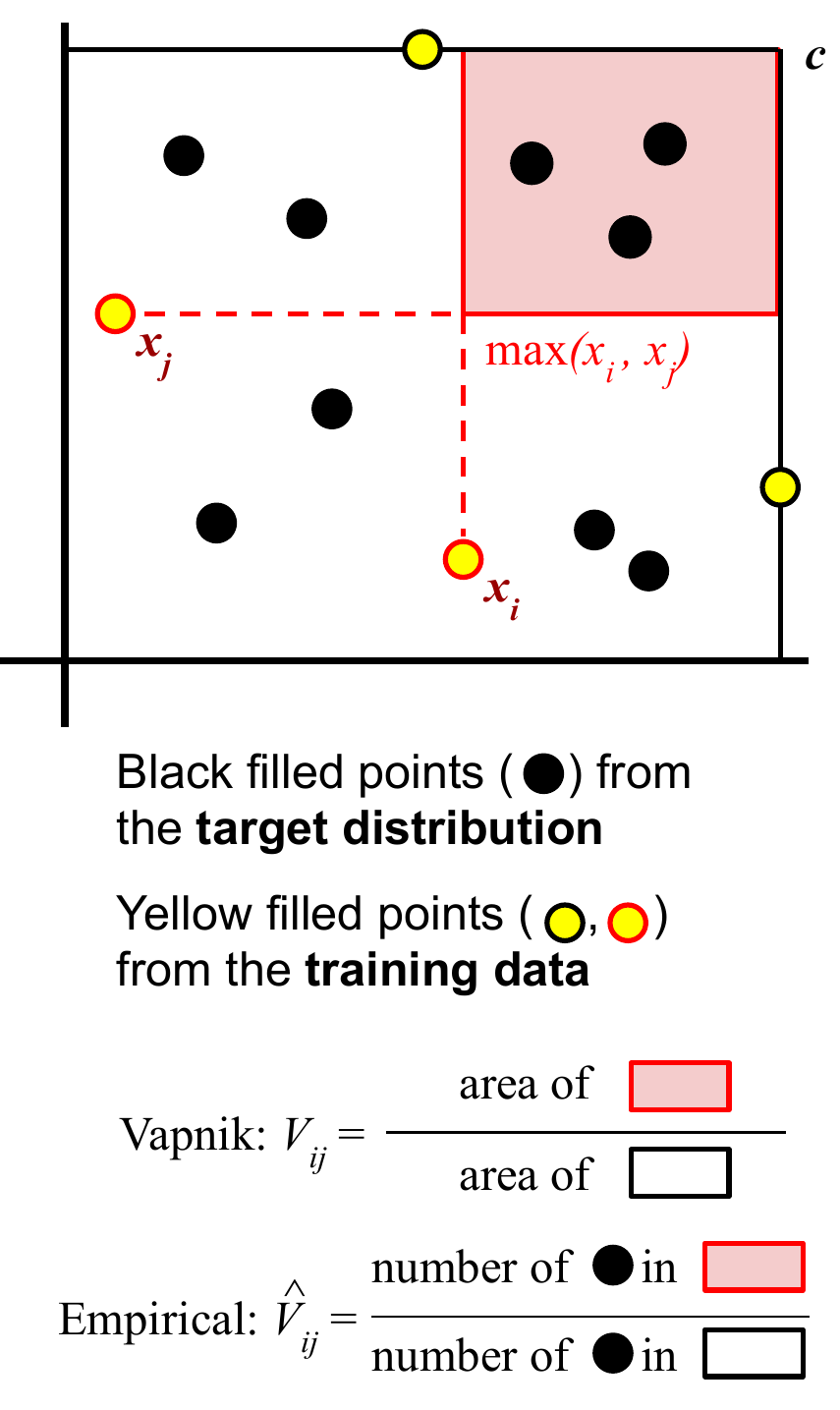}
\end{center}
\end{minipage}
\caption{The algorithm for computing the V-matrix and visualization of the steps. The red-border points define the lower-left vertex of the red rectangle. Two other training data-points (shown with black borders) give $c$ which gives the domain boundary of the data. For \cite{Vapnik}, $V(i,j)$ corresponds to the probability of a point drawn from the uniform distribution falling in the red region.  Our empirical $\hat{V}(i,j)$ estimates the \emph{target} probability of falling in the red region.}
\label{fig:vvisualization}
\end{center}
\end{figure}


\begin{figure}[h]
\begin{center}
\begin{minipage}{0.58\textwidth}
\begin{center}
\begin{algorithm}[H]
  \centering
    \caption{Covariate Shift Classification 2}\label{algo:covariate_shift_2}
\textbf{Input:} $D = (x_1,y_1),(x_2,y_2),\cdots, (x_N,y_N)$, $T = t_1,t_2,\cdots,t_M.$ \\All $x_i$ and $t_q$ are of dimension $n$.\\

\hspace*{-3.65 cm} \textbf{Output:} $p(y = 1|x)$ 
\begin{algorithmic}[1]
\Procedure{ComputeVAdditive($D$, $T$)}{}
\State $V \gets 0^{N \times N}$
\For{$i, j \leq N$}
\For{$q \leq M$}
\For{$l \leq n$}
\If{$\max(x_i^{(l)}, x_i^{(l)}) \leq t_q^{(l)}$}
\State $V_{ij} \gets V_{ij}+\frac{1}{l}$
\EndIf
\EndFor
\EndFor
\State $V_{ij} \gets \frac{V_{ij}}{M}$
\EndFor
\Return V
\EndProcedure  
\end{algorithmic}
\begin{algorithmic}[1]
\State $V \gets$ C{\scriptsize OMPUTE}VA{\scriptsize DDITIVE}$(D,T)$
\State $p(y=1|x) \gets$ V-SVM$(V,D,T)$
\end{algorithmic}
\end{algorithm}
\end{center}
\end{minipage}
\begin{minipage}{0.4 \textwidth}
\begin{center}
\includegraphics[width=1.35 in]{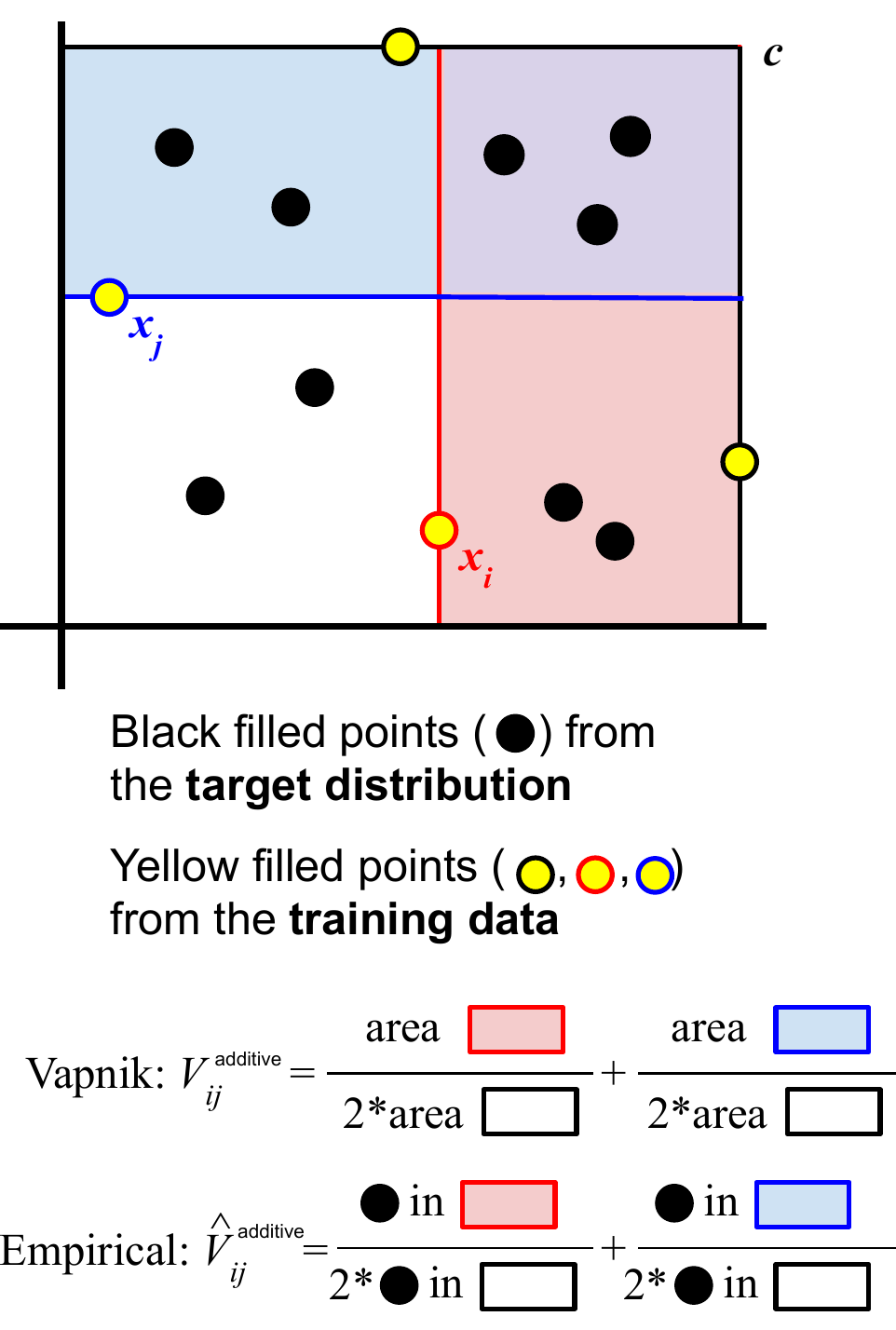}
\end{center}
\end{minipage}
\caption{The algorithm for computing the \emph{additive} V-matrix used for higher dimensions and visualization of the steps. Here we take the maximum along dimension $l$, and count the points whose $l$ dimension is larger.}
\label{fig:vvisualization}
\end{center}
\end{figure}

\subsection{Empirical V-Matrix using the target distribution}
Given i.i.d. test data $t_1, t_2, \cdots, t_M$, the empirical cumulative distribution of the target distribution is given by 
\begin{equation}\label{eq:test_empirical}
P^t_M(x) = \frac{1}{M}\sum_{q=1}^M \theta(x-t_q) 
\end{equation}
Using $P^t_M(x)$ from Eq. (\ref{eq:test_empirical}) for $\mu(x)$ in Eq. (\ref{eq:V_matrix}) gives us an empirical $V(i,j)$ which we denote by $\hat{V}(i,j)$ \begin{equation}\label{eq:testplug}
\hat{V}(i,j) = \frac{1}{M}\sum_{q=1}^M\theta(t_q-x_i)\theta(t_q-x_j)\phi(t_q).
\end{equation}
which can be rewritten as 
    $\hat{V}(i,j) = \frac{1}{M}\sum_{q=1}^M\phi(t_q)\prod_{k=1}^n\mathcal{I}_{t_q^k\geq max(x_i^k,x_j^k)}.$
For $\phi(\cdot) = 1$, this is given by 
\begin{equation}
\label{eq:test_V}
    \hat{V}(i,j) = \frac{1}{M}\sum_{q=1}^M\prod_{k=1}^n\mathcal{I}_{t_q^k\geq \max(x_i^k,x_j^k)}
\end{equation}
where $\mathcal{I}$ denotes the indicator function.

As $\hat{V}(i,j)$ above is a function of test data, the loss function $L(f)$ becomes dependent on test data under this formulation (see Equations \ref{eq:lf} and \ref{eq:final}).

In Lemma \ref{thm:MVUE} below, we show that $\hat{V}(i,j)$ in Eq. (\ref{eq:test_V}) is the minimum variance unbiased estimator of the \emph{true} $V$.
\begin{lemma}\label{thm:MVUE}If $\mu^{target}(x) = \prod_{k=1}^n\mu_k^{target}(x^k)$ denotes the true cumulative distribution function of the target distribution, then $\hat{V}(i,j)$ in Equation (\ref{eq:test_V}) is the minimum variance unbiased estimator (MVUE) of true $V^{true}(i,j)$, where $V^{true}(i,j)$ is obtained using Lemma \ref{lem:V}.
\end{lemma}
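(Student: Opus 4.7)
My plan is to split the claim into two parts: unbiasedness by a direct expectation computation, and minimum variance via the Lehmann--Scheff\'e theorem applied to the order statistics of the test sample in the nonparametric family of product measures on $\R^n$.

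For unbiasedness, I would exploit the product structure of $\mu^{target}$: since the coordinates of each i.i.d.\ draw $t_q$ are independent,
\begin{equation*}
\E\!\left[\prod_{k=1}^n \mathcal{I}_{t_q^k \ge \max(x_i^k,x_j^k)}\right]
 \;=\; \prod_{k=1}^n \Pr\!\bigl(t_q^k \ge \max(x_i^k,x_j^k)\bigr)
 \;=\; \prod_{k=1}^n \bigl[1 - \mu_k^{target}\!\bigl(\max(x_i^{k-},x_j^{k-})\bigr)\bigr],
\end{equation*}
which equals $V^{true}(i,j)$ by Lemma~\ref{lem:V}. Averaging over $q=1,\ldots,M$ preserves the expectation, so $\E[\hat V(i,j)] = V^{true}(i,j)$.

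For minimum variance, I would observe that $\hat V(i,j)$ is a symmetric function of $t_1,\ldots,t_M$, hence measurable with respect to the order statistics of the test sample. Because the target distribution is only restricted to lie in the rich nonparametric family of product measures on $\R^n$, these order statistics form a complete sufficient statistic: sufficiency is immediate from the Fisher--Neyman factorization, and completeness for nonparametric families is a classical result (cf.\ Lehmann \& Casella, \emph{Theory of Point Estimation}). The Lehmann--Scheff\'e theorem then guarantees that any unbiased estimator measurable with respect to a complete sufficient statistic is the essentially unique UMVUE, so $\hat V(i,j)$ is the MVUE of $V^{true}(i,j)$.

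The main obstacle I anticipate is justifying completeness of the order statistics for this nonparametric product-measure family; the rest is a direct expectation calculation together with a standard theorem. For a self-contained version, I would sketch completeness by noting that if $h$ is a symmetric function of the sample whose expectation vanishes under every product measure, then testing $h$ against rectangle indicators separates points in the space of product distributions and forces $h\equiv 0$ almost surely.
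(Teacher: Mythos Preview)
Your proposal is correct but takes a different route from the paper. The paper's argument is shorter: it notes that $Z(t) := \prod_k \mathcal{I}_{t^k \ge \max(x_i^k,x_j^k)}$ is a Bernoulli random variable with success probability $V^{true}(i,j)$, rewrites $\hat V(i,j)$ as the sample mean $\frac{1}{M}\sum_q Z(t_q)$, and then invokes the textbook fact that the sample mean is the MVUE of a Bernoulli parameter (the sum being complete sufficient in the one-parameter Bernoulli family). You instead work with the full observations $t_1,\ldots,t_M$, using completeness of the order statistics in the nonparametric product-measure family together with Lehmann--Scheff\'e. The paper's reduction is more elementary and sidesteps nonparametric completeness entirely, but strictly speaking it only yields optimality among estimators that factor through the indicators $Z(t_q)$; it does not explain why no estimator based on the raw $t_q$'s can beat the sample mean. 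Your approach closes exactly that gap, at the price of the heavier completeness argument you correctly flag as the main obstacle (and your rectangle-indicator sketch is on the right track, since products of finitely supported marginals already suffice to separate symmetric functions).
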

\begin{proof}
Consider a Bernoulli trial $Z(t)$, with 
$$Z(t)= \prod_{k=1}^n\mathcal{I}_{t^k\geq max(x_i^k,x_j^k)}.$$
Therefore
$$p(Z(t) = 1) =  \prod_{k=1}^n\left[1-\mu_k^{target}(max(x_i^{k-},x_j^{k-}))\right].$$ Also note from Lemma 2.1, $$V^{true}(i,j) = \prod_{k=1}^n\left[1-\mu_k^{target}(max(x_i^{k-},x_j^{k-}))\right].$$ Hence, we have $V^{true}(i,j) = p(Z(t) = 1).$
 $\hat{V}(i,j)$ can be rewritten as $\hat{V}(i,j) = \frac{1}{M}\sum_{q=1}^MZ(t_q).$ Therefore, $\hat{V}(i,j)$ denotes the sample mean of the Bernoulli trial $Z(t)$ with samples $Z(t_1),Z(t_2),\cdots,Z(t_m)$. We know that sample mean of a Bernoulli trial is unbiased and a sufficient statistic for $p(Z(t)=1)$. Hence, $\hat{V}(i,j)$ is the minimum variance unbiased estimator of $p(Z(t) = 1)$~\cite{dudahart}.   
 \end{proof}
 Theorem \ref{thm:conc} stated below is about the rate of convergence of the loss calculated using empirical $V$ to the true loss in terms of the number of target points $M$ for $n=1.$
 \begin{theorem}\label{thm:conc}
 Let $\rho^2(V) = \frac{1}{N^2}\sum_{i,j=1}^Nl_il_jV(i,j)$, where $l_i = (f(x_i)-y_i)$ and $l_j = (f(x_j)-y_j)$ then for $n = 1$, $|\rho^2(V^{true})-\rho^2(\hat{V})|\leq \sqrt{\frac{\log M}{M}}\sum_{i,j=1}^N\frac{|l_il_j|}{N^2}$ with probability $\geq 1-\frac{2}{M^2}.$
\end{theorem}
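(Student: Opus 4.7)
The plan is to separate the randomness from the deterministic dependence on the loss residuals. By the triangle inequality,
\begin{equation*}
|\rho^2(V^{true})-\rho^2(\hat V)| \leq \frac{1}{N^2}\sum_{i,j=1}^N |l_i l_j|\,|V^{true}(i,j)-\hat V(i,j)|,
\end{equation*}
so it suffices to produce a high-probability uniform bound on $|V^{true}(i,j)-\hat V(i,j)|$ that holds simultaneously for all pairs $(i,j)$.

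Next I would rewrite both $V^{true}$ and $\hat V$ as values of CDFs. In dimension $n=1$, Lemma \ref{lem:V} gives $V^{true}(i,j) = 1 - \mu^{target}(\max(x_i^-, x_j^-))$, while the empirical definition in Eq.~(\ref{eq:test_V}) gives $\hat V(i,j) = 1 - \hat F_M(\max(x_i^-, x_j^-))$, where $\hat F_M$ is the empirical CDF of the target samples $t_1,\ldots,t_M$. Therefore
\begin{equation*}
|V^{true}(i,j) - \hat V(i,j)| \;=\; \big|\hat F_M(\max(x_i^-,x_j^-)) - \mu^{target}(\max(x_i^-,x_j^-))\big| \;\leq\; \sup_{x\in\R} |\hat F_M(x) - \mu^{target}(x)|.
\end{equation*}
Crucially, the right-hand side does not depend on $i,j$, so a single uniform control yields the bound for all entries at once.

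The main tool is then the Dvoretzky--Kiefer--Wolfowitz inequality, which states that $P\big(\sup_x |\hat F_M(x) - F(x)| > \epsilon\big) \leq 2\exp(-2M\epsilon^2)$. Setting $\epsilon = \sqrt{\log M / M}$ gives the tail probability $2\exp(-2\log M) = 2/M^2$, which matches the probability stated in the theorem. Combining this with the triangle-inequality reduction above yields $|\rho^2(V^{true})-\rho^2(\hat V)| \leq \sqrt{\log M/M}\cdot \sum_{i,j} |l_i l_j|/N^2$ on the good event.

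The only subtle point I anticipate is the left-limit ($x^-$ versus $x$) accounting in the CDF: the DKW inequality is usually written for right-continuous CDFs, and I should check that the same $\sup$-norm bound transfers to left-limits (it does, since the supremum over $x$ coincides with the supremum over left-limits for monotone functions, or by noting that jump discontinuities occur on a countable set of full measure zero for the relevant sampling statements). Beyond that bookkeeping, the proof is essentially a one-shot application of DKW to the empirical target CDF, and I do not anticipate any serious obstacle.
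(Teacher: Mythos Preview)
Your proposal is correct and mirrors the paper's own proof almost exactly: the paper also reduces via the triangle inequality to a uniform bound on $|V^{true}(i,j)-\hat V(i,j)|$, rewrites each entry as a CDF value at $\max(x_i,x_j)$, bounds the difference by the Kolmogorov--Smirnov statistic $D_{KS}$, and then applies the DKW/Massart inequality with $\epsilon=\sqrt{\log M/M}$. The only cosmetic differences are that the paper calls the concentration result ``Massart's inequality'' and handles the left-limit issue with a brief footnote rather than a monotonicity remark.
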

\begin{proof}
For ease of notation, we prove the theorem assuming continuous $\mu^{target}(.)$ (check footnote\footnote{For cumulative distribution functions that are not left continuous, we can replace $\mu^{target}(x)$ by $\mu^{target}(x^-)$ at points of discontinuity.}) 

For convenience assume $m_{ij} = max(x_i,x_j)$. Further we know that $$V^{true}(i,j) = 1-\mu^{target}(m_{ij}),$$ 
\begin{equation*}
\hat{V}(i,j) = \frac{1}{M}\sum_{q=1}^M\mathcal{I}_{t_q\geq m_{ij}}
             = 1- \frac{1}{M}\sum_{q=1}^M\mathcal{I}_{t_q< m_{ij}}
\end{equation*}
We define $$D_{KS} \triangleq sup_x\left|\frac{1}{M}\sum_{q=1}^M\mathcal{I}_{t_q< x}-\mu^{target}(x)\right|$$
$D_{KS}$ is popularly known as Kolmogorov-Smirnov distance~\cite{kolmogorov,smirnov}. Therefore, we have
\begin{equation*}
\begin{split}
\left|V^{true}(i,j)-\hat{V}(i,j)\right| &= \left|\frac{1}{M}\sum_{q=1}^M\mathcal{I}_{t_q< m_{ij}}-\mu^{target}(m_{ij})\right|\\&\leq D_{KS}
\end{split}
\end{equation*}
\begin{align*}
    \left|\rho^2(V^{true})-\rho^2(\hat{V})\right|&=\left|\frac{1}{N^2}\sum_{i,j=1}^Nl_il_j\left[V^{true}(i,j)-\hat{V}(i,j)\right]\right|\\
    &\leq\frac{1}{N^2}\sum_{i,j=1}^N|l_il_j|\left|V^{true}(i,j)-\hat{V}(i,j)\right|\\
    &\leq\frac{1}{N^2}D_{KS}\sum_{i,j=1}^N|l_il_j|
\end{align*}
Now using Massart's Inequality~\cite{dudley_2014},
$Pr(D_{KS} > \epsilon) <2e^{-2M\epsilon^2}.$
Therefore the statement of theorem follows by choosing $\epsilon = \sqrt{\frac{\log M}{M}}.$
\end{proof}

For more dimensions, we have the following Theorem \ref{thm:main_conc}.  
\begin{theorem}\label{thm:main_conc}
Given $V^{true}$ and $\hat{V}$, and assuming bounded $|l_il_j|$ for any $i$ and $j$, the lower bound on the probability that the empirical loss function is within $\delta'$ of the true loss function for a given $\delta > 0$ is given by 
   $ \Pr(\left|\rho^2(V^{true}) - \rho^2(\hat{V})\right| \leq \delta') \geq 1 - N(N+1) e^{-2M\delta^2},$
where $\delta'  =  \delta \max_{i,j}|l_il_j|$. This suggests that $M > c\frac{\log(N)}{\delta^2}$ with $c>1$ leads to convergence in probability for any $\delta > 0$.
\end{theorem}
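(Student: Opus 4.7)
The plan is to reduce Theorem~\ref{thm:main_conc} to an entrywise concentration statement on the $V$-matrix and then pass to the loss $\rho^2$ via linearity and the triangle inequality. The key observation, inherited from the proof of Lemma~\ref{thm:MVUE}, is that for each fixed pair $(i,j)$ the estimator $\hat V(i,j)$ is exactly the sample mean of $M$ i.i.d.\ Bernoulli trials with success probability $V^{true}(i,j)$, namely $Z(t_q)=\prod_{k=1}^n\mathcal{I}_{t_q^k\geq\max(x_i^k,x_j^k)}$. This representation is what makes the higher-dimensional case tractable: the ambient dimension $n$ enters only through the definition of the Bernoulli variable and not through its concentration, so Hoeffding's inequality can be applied directly, entry by entry.

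First I would apply Hoeffding to each fixed pair $(i,j)$ with deviation $\delta$, obtaining $\Pr(|\hat V(i,j)-V^{true}(i,j)|>\delta)\leq 2e^{-2M\delta^2}$. Then I would union-bound over the distinct entries of the symmetric matrices $V^{true}$ and $\hat V$. By symmetry only $N(N+1)/2$ entries require control, which yields total failure probability at most $2\cdot\tfrac{N(N+1)}{2}e^{-2M\delta^2}=N(N+1)e^{-2M\delta^2}$, matching precisely the bound in the theorem statement. On the complementary event one has the uniform guarantee $\max_{i,j}|\hat V(i,j)-V^{true}(i,j)|\leq\delta$.

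To convert this uniform entrywise bound into a bound on $|\rho^2(V^{true})-\rho^2(\hat V)|$, I would write $\rho^2(V^{true})-\rho^2(\hat V)=\frac{1}{N^2}\sum_{i,j=1}^N l_i l_j\bigl(V^{true}(i,j)-\hat V(i,j)\bigr)$, apply the triangle inequality, pull out the uniform entry-deviation $\delta$, and use $\frac{1}{N^2}\sum_{i,j}|l_i l_j|\leq\max_{i,j}|l_i l_j|$ to arrive at $\delta'=\delta\max_{i,j}|l_i l_j|$. The asymptotic remark $M>c\log(N)/\delta^2$ with $c>1$ then follows immediately by substituting into $N(N+1)e^{-2M\delta^2}$ and observing that the failure probability is dominated by $N^{2-2c}$, which vanishes. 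The only mild subtlety in the whole argument is the counting step of the union bound: the prefactor $N(N+1)$ rather than $2N^2$ comes from exploiting symmetry so that each off-diagonal pair is counted once; apart from this bookkeeping, the proof is a routine combination of Hoeffding, a union bound, and the triangle inequality, and I do not anticipate any serious obstacle.
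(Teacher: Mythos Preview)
Your proposal is correct and follows essentially the same route as the paper: Hoeffding applied entrywise (using the Bernoulli representation from Lemma~\ref{thm:MVUE}), a union bound over the $N(N+1)/2$ distinct entries exploiting symmetry to obtain the $N(N+1)e^{-2M\delta^2}$ prefactor, and then the triangle inequality combined with $\frac{1}{N^2}\sum_{i,j}|l_il_j|\leq\max_{i,j}|l_il_j|$ to pass to $\rho^2$. The only cosmetic difference is ordering---the paper states the triangle-inequality bound on $|\rho^2(V^{true})-\rho^2(\hat V)|$ first and then controls $\frac{1}{N^2}\sum_{i,j}|V^{true}(i,j)-\hat V(i,j)|\leq\delta$, whereas you control $\max_{i,j}|V^{true}(i,j)-\hat V(i,j)|\leq\delta$ first and then apply the triangle inequality; both arrive at the same $\delta'$.
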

\begin{proof}
We first note that 
\begin{align}\label{eq:simpbound}
\left|\rho^2(V^{true})-\rho^2(\hat{V})\right| \leq \frac{1}{N^2} \sum_{i, j} |l_il_j|\left|V^{true}(i, j) - \hat{V}(i, j)\right|.
\end{align}
We prove this theorem by giving a bound for $\frac{1}{N^2} \sum_{i, j} \left|V^{true}(i, j) - \hat{V}(i, j)\right|$.  From the proof of Lemma 3.1, $\hat{V}(i,j)$ denotes the sample mean of i.i.d. Bernoulli random variables with $V^{true}(i,j)$ being the expected value. Hence, by the Hoeffding inequality 
\begin{align*}
\Pr(\left|V^{true}(i, j) - \hat{V}(i, j)\right|>\delta) &< 2e^{-2M\delta^2}
\end{align*}
Now, we can use the union bound and the fact that $|V^{true}(i,j)-\hat{V}(i,j)| = |V^{true}(j,i)-\hat{V}(j,i)|$ to get
\begin{align*}
\Pr(\frac{1}{N^2}\sum_{i, j} \left|V^{true}(i, j) - \hat{V}(i, j)\right|\leq \delta) \\\geq 1-2(\frac{N(N-1)}{2}+N)e^{-2M\delta^2}\\
= 1-N(N+1)e^{-2M\delta^2}
\end{align*}
Now letting $\delta' = \delta\max_{i,j}{|l_il_j|}$, we use inequality (\ref{eq:simpbound}) to obtain the required result.
\end{proof}
\subsection{Algorithm}
Our algorithm uses a sample from the target distribution, $T$ to calculate the $V$-matrix. The worst case complexity of calculating the $V$-matrix is $O(N^2 M)$. 
The full complexity of learning will depend on the choice of learning algorithm. We describe these algorithms for the $V$-matrix proposed in Eq. (\ref{eq:test_V}) for the $V$-SVM learning algorithm.

This framework gives a loss function in Eq. (\ref{eq:final}), so it can be easily adapted to different choices of learning models like decision tree based gradient boosting~\cite{friedman2001}. See the appendix for additional details on using the V-matrix framework in the context of gradient boosting.\\
\begin{figure}
    \centering
    \subfloat[]{\includegraphics[width =0.3\textwidth]{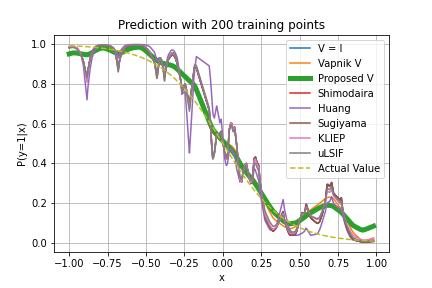}\label{fig:stability}}
    \subfloat[]{\includegraphics[width=0.3\textwidth]{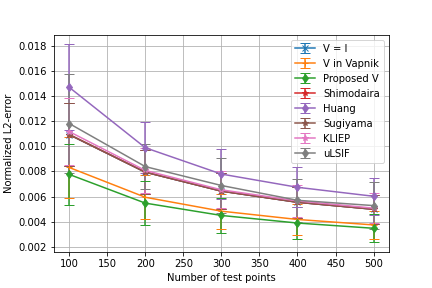}
\label{fig:error}}
    \subfloat[]{\includegraphics[width=0.3\textwidth]{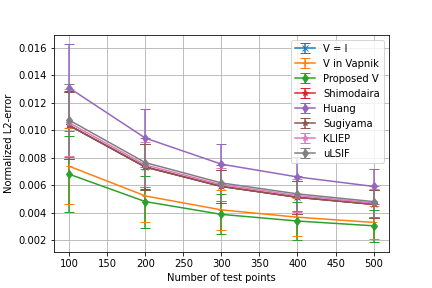}\label{fig:V_different_test}}
    \caption{(a) Robustness of methods with 200 training points. (b) Normalized $L2$-error in Experiment 2. (c) Normalized $L2$-error in Experiment 2 using a different target sample. }
\end{figure}
{\bf Algorithmic Remark:}
In higher dimensions the $\hat{V}$-matrix stated in Eq. \ref{eq:test_V} can be ill-conditioned.
\cite{Vapnik} suggests using an additive version of the $V$-matrix. The derivation this version considers the empirical CDF for each dimension separately, minimizing the sum of the $L2$ errors (for details refer Equation 55 in~\cite{Vapnik}). 
Thus, Eq. \ref{eq:test_V} instead becomes $\hat{V}^{additive}(i,j) = \frac{1}{nM}\sum_{q=1}^M\sum_{k=1}^n\mathcal{I}_{t_q^k\geq max(x_i^k,x_j^k)}.$ This is described in Algorithm~\ref{algo:covariate_shift_2}. We use $\hat{V}^{additive}$ for some of our experiments on real datasets. 
\subsection{Experiments on Synthetic Datasets}
We conducted multiple experiments to show the applicability and effectiveness of the \emph{proposed} $\hat{V}$-matrix given by Eq. (\ref{eq:test_V}). In both the experiments below, a gaussian KDE with bandwidth $h = 2.0$ is chosen for \cite{reweighting,Sugiyama,Makoto}. Further $\tau = 0.5$ for \cite{Sugiyama}. For Huang et al. ~\cite{huang2007correcting} method, $B = 1000, \epsilon = \frac{\sqrt{N} -1}{\sqrt{N}}, \sigma = 0.1,$ where $N$ is the size of training set.  KLIEP and uLSIF are evaluated using the matlab code on the authors'
website~\cite{Sugiyama08,Sugiyama09}.

The training data comprises $\{(x_i,y_i)\}_{i=1}^N$, where $x_i \in [-1,1]$ and $y_i \in \{0,1\}$ such that $p(y_i=1|x_i) = \frac{1}{1+e^{5x_i}}.$ The training data $\{x_i\}_{i=1}^N$ is generated using distribution $p(x)\sim \Uniform[-1,1]$ and the test data $\{t_q\}_{q=1}^M$ is generated using distribution $q(t) \sim \Uniform[0,1]$ with probability $0.3$ and $\Uniform[-1,0]$ with probability $0.7$. 

In Experiments 1 and 2 below, a SVM is used with a square rooted Gaussian kernel with kernel width = $1$ and regularization coefficient $\gamma = 0.1$. 

{\bf Experiment 1: Robustness}\\
Figure \ref{fig:stability} shows the stability of the predicted $p(y=1|x)$ for different methods \emph{on a single run}. $N=200$ training points and  $M=1000$ testing points were used. 

We demonstrate that, like the $V$-matrix derived in~\cite{Vapnik} (given in Example \ref{exm:main_V}), the proposed $\hat{V}$-matrix in Equation \ref{eq:test_V} is stable in its prediction of conditional probability.  In contrast, the predictions by unweighted $L2$-loss ($V = I$) and the weighted $L2$-loss using Shimodaira et al. \cite{reweighting}, Huang et al. \cite{huang2007correcting},  Sugiyama et al. \cite{Sugiyama,Sugiyama08}, and Kanamori et al.~\cite{Sugiyama09} are less smooth, indicating instability. 

{\bf Experiment 2: Probability Prediction Error}\\
Figure \ref{fig:error} compares the normalized $L2$-error of the predicted conditional probability function for various methods. The V-matrix framework is designed to minimize this error, so the improvement is not surprising. Our empirical V-matrix further improves this error because it takes into account the differences between the training and target distributions. Figure \ref{fig:V_different_test} instead calculates this V-matrix on a \emph{different} set of $500$ samples from the test distribution. Both Figures \ref{fig:error} and \ref{fig:V_different_test} are averaged over $50$ trials.

 
 {\it Remark:} The error curves for Sugiyama et al.~\cite{Sugiyama}, KLIEP~\cite{Sugiyama08},  uLSIF~\cite{Sugiyama09} and Shimodaira et al. ~\cite{reweighting} methods in Figures \ref{fig:error} and \ref{fig:V_different_test} are overlapping as the mean errors are close.
\subsection{Experiments on Real Datasets}
These experiments use typical datasets from previous papers~\cite{huang2007correcting,Sugiyama09} and synthetically generated covariate shift, as is standard in \cite{huang2007correcting, Sugiyama, Sugiyama08, Sugiyama09}. 
\cite{reweighting, Sugiyama} represent different regularizations of the ratio of KDE densities and perform similarly, so we only show \cite{reweighting}. 
Further, the performance obtained for uLSIF~\cite{Sugiyama09} and RuLSIF~\cite{Makoto} are similar, so we have only included uLSIF. 
Results given by \cite{huang2007correcting} may not be optimally tuned, as finding the best values for hyper-parameters is difficult. In all the experiments below, we normalize all feature vectors to be in $[0, 1]^n$ ($n$ is the dimension of feature vector).
{\bf Experiment 3: Replicating \cite{Sugiyama09}}\\
In this experiment, we bias the datasets by following the method used in \cite{Sugiyama09}.  We randomly choose $c$ from $1$ to $n$ and fix it for each trial. A sample $x_k$ is chosen randomly from the dataset and is accepted in the test (target) set with probability $\min(1,4(x_k^{c})^2).$ The sample $x_k$ is removed from the dataset whether or not it was accepted into the test set. We continue until we have $500$ test samples. A training set of size $100$ is chosen uniformly at random from the remaining samples in the dataset. The mean performance error over $100$ trials for twonorm and ringnorm datasets~\cite{Ratsch} is provided in Table \ref{tab:experiements}. For the ringnorm dataset, $5$ features were randomly chosen in each trial for classification. We used additive version of the V-matrix when testing our method. 

In Table \ref{tab:experiements}, we observe a consistent good performance by our method for both the datasets, outperforming other methods on twonorm data. KLIEP and uLSIF do not improve on the twonorm data but perform the best on the ringnorm data. The cancer, diabetes and banknote datasets did not have enough samples to generate $500$ test points with the biasing technique described here, so we restricted this experiment to the datasets used in \cite{Sugiyama09}.\\\\
{\bf Experiment 4: Single Feature Bias}\\
This experiment is a generalization of the one used in \cite{huang2007correcting}. $100$ training points are biased by first randomly selecting a single feature. We then decide to bias the feature up or down down. If biasing up, we increase the probability of selecting a datapoint with a feature value larger than the median by a factor of $4$. If biasing down, we decrease its probability by a factor of $4$. Each dataset biasing was performed $100$ times. Here we use the additive version of the empirical V-matrix.

This biasing scheme is challenging for the previous methods -- none are able to achieve significant improvement over the unweighted classifier for the cancer, diabetes, banknote, and ringnorm datasets. Our method improves performance in the banknote and ringnorm datasets. Further, our method is stable in that it never significantly exceeds the error of an unweighted classifier (see cancer and twonorm in Table \ref{tab:experiements}). \cite{Sugiyama09} occasionally performs significantly worse than an unweighted classifier.\\\\
{\bf Experiment 5: Multiple Feature Bias}\\
This experiment is similar to Experiment 4, except that it biases the $L2$ norm of the feature vector instead of a specific feature. The biasing scheme for the norm is the same as given for a single feature in Experiment 4. That is, we bias the selection of datapoints with larger than median $L2$ norm up or down by a factor of $4$.  Because there is less variation between the datasets generated by this biasing method, the mean performance error is given over $50$ trials. 
As with Experiment 4, some datasets paired with this biasing scheme are too difficult for any of the methods tested. Our method is the only method to improve significantly in the banknote and ringnorm datasets. The twonorm dataset is the only case where our method underperforms an unweighted classifier, and it does so with a large variance.
\begin{table}[ht]
    \centering
   \tiny{ \begin{tabular}{|c|c|c|c|c|c|c|}
        \hline
    {\bf Experiment} &{\bf Dataset} & {\bf Our Method} & {\bf Shimodaira~\cite{reweighting}} & {\bf Huang~\cite{huang2007correcting}} & {\bf KLIEP~\cite{Sugiyama08}} & {\bf uLSIF~\cite{Sugiyama09}}\\
         \hline
         \multirow{2}{*}{Replicating \cite{Sugiyama09}}& twonorm (20)  &        \boldmath \cellcolor{good}$0.935( 0.118)$ & $1.001(0.008)$ &\cellcolor{good} $0.939(0.236)$ &  $1.000 ( 0.159)$ & $1.003 ( 0.061)$\\
         & ringnorm (20)  & \cellcolor{good} $0.967( 0.084 )$ & $1.000(0.002)$ & \cellcolor{bad} $0.985 ( 0.300)$ & \cellcolor{good} $0.873 (.244)$ & \cellcolor{good} $0.909( 0.241)$\\
         \hline
         \multirow{5}{*}{Single feature bias} &cancer (9)  & $1.072(0.122)$ & $1.029(0.111)$ & \cellcolor{bad}$1.139(0.189)$ & $1.019(0.107)$ & \cellcolor{bad}$1.779(2.444)$ \\
         &diabetes (8) & \boldmath$0.994(0.054)$ & $1.002(0.017)$ & $1.059(0.080)$ & $1.001(0.034)$ & $1.019(0.069)$\\
         &banknote (4) & \boldmath \cellcolor{good} $0.951(0.242)$ & $0.995(0.021)$ & \cellcolor{bad} $1.006(0.320)$ & \cellcolor{bad} $1.015(0.250)$ & \cellcolor{bad} $1.155(0.542)$\\
         &ringnorm (20) & \boldmath\cellcolor{good}$0.905(0.086)$ & $1.000(0.004)$ & \cellcolor{bad}$1.396(0.164)$ & $1.046(0.059)$ & $1.018(0.047)$\\
         &twonorm (20) & \cellcolor{bad} $1.187(0.200)$ & $0.998(0.013)$ & \cellcolor{good} $0.952(0.186)$ & \cellcolor{bad} $1.063(0.159)$ & $1.001(0.072)$\\
         \hline
         \multirow{5}{*}{Multiple feature bias} & cancer (9) & $1.075(0.115)$ & \cellcolor{bad} $0.938(0.736)$ & $1.097(0.177)$ & $1.015(0.057)$ & \cellcolor{bad} $1.370(1.813)$ \\

         &diabetes (8) & $1.006(0.039)$ & $1.020(0.137)$ & $1.039(0.082)$ & $0.997(0.014)$ & $0.999(0.019)$\\

         &banknote (4) & \cellcolor{good} \boldmath$0.919(0.216)$ & $1.005(0.089)$ & \cellcolor{bad} $1.032(0.248)$ & $1.008(0.110)$ & \cellcolor{bad} $1.208(0.496)$\\

         &ringnorm (20) & \boldmath \cellcolor{good} $0.923(0.081)$ & \cellcolor{good} $0.978(0.032)$ & \cellcolor{bad} $1.334(0.124)$ & $1.032(0.060)$ & $1.015(0.032)$\\

         &twonorm (20) & \cellcolor{bad} $1.247(0.277)$ & \cellcolor{good} $0.987(0.084)$ & \cellcolor{good} $0.959(0.203)$ & \cellcolor{bad} $1.054(0.183)$ & $0.994(0.034)$\\
         \hline
    \end{tabular}}
    \caption{A comparison of mean(standard deviation) error for multiple methods on real datasets. Results are normalized so that $1.0$ indicates equal performance to a unweighted classifier. We have colored cells blue if they represent an improvement over the unweighted case, and red if they perform worse or have large instability (mean + std > 1.2). The number of input dimensions in each dataset is given in parentheses.}
    \label{tab:experiements}
\end{table}
\section{Conclusion}\label{sec:conclusion}
Our method makes two important contributions towards more effective ways to handle covariate shift. First, it removes the need to tune parameters like kernel bandwidth by switching to a CDF-based framework. Second, in addition to having comparable or superior performance in classification tasks, our framework is significantly more robust in its predictions of conditional probability of labels.

Though we focused on the pairwise classification problem in this paper, our covariate shift method can also be extended to multiclassification by using one-hot encoding for labels and to regression by using the regression version of the Fredholm integral equation (see details in \cite{Vapnik} for these settings). Further, the V-matrix framework is not limited to SVMs and can be applied to other machine learning algorithms using the loss function described in Eq. (\ref{eq:final}). For instance, we derive the closed form solution for the gradient boosting algorithm~\cite{friedman2001}, and give this loss function in the appendix. These loss functions can be minimized by standard gradient descent techniques.

A limitation is our method's inability to handle large dimensional data. While using the additive version of the V-matrix provides a fix, future work may explore forms of dimension reduction that allow the use of the original multiplicative V-matrix, which may further improve performance.

\section*{Broader Impact}
Machine learning is limited by the availability and quality of data. In many circumstances we may not have access to labeled data from our target distribution. Improving methods for covariate shift will help us extend the impact of the data we do have.

Stably predicting a conditional probability distribution has an application that has recently gained notorious prominence. In the presence of a pandemic, one may wish to predict the death-rate to calculate the expected toll on society. This translates exactly to predicting the conditional probability of dying given demographic information. Data that has been gathered often has a large sampling bias, since a persons risk profile may affect their willingness to leave home and participate in a study, and their age may affect their availability for an online survey. A stable method for covariate shift in this situation can be a critical part of ensuring officials have accurate statistics when making decisions.

One potential negative impact of the work would be if it were to be misunderstood and misused, yielding incorrect results. This is a concern in all areas of data science, so it is important that the conditions for appropriate use be well understood.

\ack
This work is supported by supported by the National Science Foundation Graduate Research Fellowship under Grant No. DGE‐1745301, NSF Grant No. CCF-1717884 and The Carver Mead New Adventure Fund.
\section*{Appendix}
\subsection*{Gradient Boosting with $V$ matrix}
\label{sec:headings}
In the gradient boosting framework introduced by Friedman~\cite{friedman2001}, the estimate $\hat{y}_i$ of $y_i$ is mathematically modeled as follows:
\begin{equation}
    \hat{y}_i = \sum_{k=1}^K f_k(x_i), \quad f_k \in \mathcal{F} 
\end{equation}
where $K$ is the number of trees, $f$ is a function the functional space $\mathcal{F}$ and $\mathcal{F}$ is the set all possible Classification and Regression Trees (CARTs). The objective function to be optimized is given by 
\begin{equation}
    obj = \sum_{i=1}^nl(y_i,\hat{y}_i) + \sum_{k=1}^K \Omega(f_{k})
\end{equation}
However, in this loss function the samples $x_i$ are assumed to be independent. In a realistic scenario, samples may not be independent, for example, the samples may be derived from different races and genders giving rise to implicit bias in the training dataset. To encounter this, we can have a loss function that takes into account the associations between different loss function. This is tackled by using a V-matrix which also measures the distance between the sample points. Formally, the objective or loss function is given as follows:
\begin{equation}
    obj = \sum_{i=1}^n\sum_{j=1}^nl(y_i,y_j,\hat{y}_i,\hat{y}_j,V_{ij}) + \sum_{k=1}^K \Omega(f_{k})
\end{equation}
Note we use $V_{ij}$ to denote $V(i,j)$ here.
The proof below deals with a special case when $l(y_i,y_j,\hat{y}_i,\hat{y}_j,V_{ij}) = (y_i-\hat{y}_i)(y_j-\hat{y}_j)V_{ij}$.

\subsection*{Proof}
\begin{equation}
obj = \sum_{i=1}^n\sum_{j=1}^n(y_i-\hat{y}_i)(y_j-\hat{y}_j)V_{ij} + \sum_{k=1}^K \Omega(f_k)
\end{equation}
\begin{equation}
    \hat{y}_i^{(t)} = \sum_{k=1}^tf_k(x_i) = \hat{y}_i^{(t-1)}+f_t(x_i)
\end{equation}
\begin{equation}
\begin{split}
obj^{(t)} \\&=\sum_{i,j=1}^n(y_i-[\hat{y}_i^{(t-1)}+f_t(x_i)])(y_j-[\hat{y}_j^{(t-1)}+f_t(x_j)])V_{ij} \\&+ \sum_{k=1}^t \Omega(f_k)\\
&= obj^{(t-1)} + \sum_{i,j=1}^nf_t(x_j)(\hat{y}_i^{(t-1)}-y_i)V_{ij}+\\& \sum_{i,j=1}^n[f_t(x_i)(\hat{y}_j^{(t-1)}-y_j)+f_t(x_i)f_t(x_j)]V_{ij} + \Omega(f_t)\\
&= \sum_{i,j=1}^n[f_t(x_j)(\hat{y}_i^{(t-1)}-y_i)+ f_t(x_i)(\hat{y}_j^{(t-1)}-y_j)]V_{ij}\\&+\sum_{i,j=1}^nf_t(x_i)f_t(x_j)V_{ij} + \Omega(f_t)+constant
\end{split}
\end{equation}
$f_t(x)$ and $\Omega(f_t)$ are defined as follows:
\begin{equation}
    f_t(x) = w_{q(x)}, ~w\in \R^T, ~q:\R^d\rightarrow\{1,2,\cdots,T\}. 
\end{equation}
where $w$ is the vector of scores on leaves, $q$ is a function assigning each data point to the corresponding leaf, and $T$ is the number of leaves. The complexity $\Omega(f_t)$ is given by  
\begin{equation}
    \Omega(f_t) = \gamma T+\frac{1}{2}\lambda\sum_{j=1}^Tw_j^2
\end{equation}
Let $g_i = \hat{y}_i^{(t-1)}-y_i$, and $I_j$ be the set of all $x_i$ that belong to leaf $j$, i.e. $I_j = \{i|q(x_i) = j\}$, then 
\begin{equation}\label{objt}
\begin{split}
    obj^{(t)} &= \sum_{i,j=1}^n[f_t(x_j)g_i+ f_t(x_i)g_j+f_t(x_i)f_t(x_j)]V_{ij}\\&+ \Omega(f_t)+constant\\
    &= \sum_{i,j=1}^n[w_{q(x_j)}g_i+ w_{q(x_i)}g_j+w_{q(x_i)}w_{q(x_j)}]V_{ij} \\&+ \Omega(f_t)+constant\\
    &= \sum_{k=1}^Tw_k[\sum_{i=1}^n\sum_{j\in I_k}g_iV_{ij}+\sum_{i\in I_k}\sum_{j=1}^ng_jV_{ij}]\\&+\sum_{l=1}^T\sum_{m=1}^Tw_lw_m\sum_{i\in I_l}\sum_{j\in I_m}V_{ij}+\frac{1}{2}\lambda\sum_{k=1}^Tw_k^2\\&+\gamma T +constant\\
    &= \sum_{k=1}^Tw_k[A_k+B_k]+\sum_{l=1}^T\sum_{m=1}^Tw_lw_mC_{lm}\\&+\frac{1}{2}\lambda\sum_{k=1}^Tw_k^2+\gamma T+constant \\
\end{split}
\end{equation}
where 
\begin{equation}
    A_k = \sum_{i=1}^n\sum_{j\in I_k}g_iV_{ij} = \sum_{i=1}^ng_i\sum_{j \in I_k}V_{ij}
\end{equation}
\begin{equation}
    B_k = \sum_{i\in I_k}\sum_{j=1}^ng_jV_{ij} = \sum_{j=1}^ng_j\sum_{i \in I_k}V_{ij}
\end{equation}
\begin{equation}
    C_{lm} = \sum_{i\in I_l}\sum_{j\in I_m}V_{ij}
\end{equation}
Taking partial derivative of $obj^{(t)}$ with respect to $w_i$ gives us
\begin{equation}\label{pde}
    A_i+B_i +\sum_{j=1}^T[w_j(C_{ij}+C_{ji})]+\lambda w_i = 0 ~\forall i \in \{1,2,\cdots,T\}.
\end{equation}
Eq. (\ref{pde}) can be rewritten as:
\begin{equation}
    Dw^* = U. 
\end{equation}
where $w^*$ is the optimal $w$, $D$ is a $T\times T$ matrix with $$D_{ij} = C_{ij}+C_{ji}, ~ j \neq i$$ $$D_{ii} = 2C_{ii}+\lambda.$$ or in other words $$D = C+C^T+\lambda I, ~I : identity~matrix$$ Also note that $D^T = D$. $U$ is a $T\times1$ vector with 
$$U_i = -(A_i+B_i).$$
If $D$ is invertible, then \begin{equation}
    w^*= D^{-1}U.
\end{equation}
The $obj^{(t)}$ function from Eq. (\ref{objt}) can be rewritten as:
\begin{equation}
    obj^{(t)} = -w^TU+w^TC^Tw+\frac{1}{2}\lambda w^Tw.
\end{equation}
For $w = w^*$, $obj^{(t)}$ denoted by $obj^*$ is given by:
\begin{equation}\label{objmain}
\begin{split}
    obj^* &= -U^T(D^{-1})^TU + U^T(D^{-1})^TC^TD^{-1}U\\&+\frac{1}{2}\lambda U^T(D^{-1})^TD^{-1}U +\gamma T+constant\\
          &= -U^T(D^T)^{-1}U + U^T(D^T)^{-1}C^TD^{-1}U\\&+\frac{1}{2}\lambda U^T(D^T)^{-1}D^{-1}U+\gamma T + constant\\
          &= -U^TD^{-1}U + U^TD^{-1}C^TD^{-1}U\\&+\frac{1}{2}\lambda U^TD^{-1}D^{-1}U+\gamma T + constant\\
\end{split}
\end{equation}
When $V_{ij} = V_{ji}$, $obj^*$ in Eq. (\ref{objmain}) above can be further simplified.  Note that, here $C^T = C$ which implies 
\begin{equation}\label{CD}
C^T = \frac{1}{2}[D-\lambda I].
\end{equation}
Plugging $C^T$ from (\ref{CD}) above in Eq. (\ref{objmain}), we get 
\begin{equation}\label{special}
    obj^* = -\frac{1}{2}U^TD^{-1}U +\gamma T + constant. 
\end{equation}
Eqs. (\ref{objmain}) and (\ref{special}) provide a metric to evaluate the goodness of the $t$-th tree in the gradient boosting algorithm. \\
{\bf Note:} $V_{ij} = V_{ji}$ also implies that $A = B$.

\end{document}